\pgfplotsset{compat=1.17}
\newtheorem{theorem}{Theorem}
\newtheorem{definition}{Definition}
\newtcolorbox{custombox}[1][]{
    colback=white!90!gray!10, 
    colframe=black!60, 
    fonttitle=\bfseries,
    coltitle=black,
    colbacktitle=black!10!white, 
    title={#1}
}
\newcommand{\alg}{VHExpansion\xspace}
\newcommand{\llava}{LLaVA-1.5\xspace}
\newcommand{\blip}{InstructBLIP\xspace}
\newcommand{\qwen}{Qwen-VL-Chat\xspace}
\newcommand{\myparatight}[1]{\smallskip\noindent{\bf {#1}:}~}
\DeclareMathOperator*{\argmin}{argmin}
\begin{document}

\begin{center}
{\Large{\bf{Automatically Generating Visual Hallucination Test Cases for Multimodal Large Language Models}}}

\vspace{1cm}

\begin{tabular}{ccccc}
    Zhongye Liu$^*$ & Hongbin Liu$^*$ & Yuepeng Hu & Zedian Shao & Neil Zhenqiang Gong \\
    \multicolumn{5}{c}{Duke University} \\
    \multicolumn{5}{c}{\{zhongye.liu, hongbin.liu, yuepeng.hu, zedian.shao, neil.gong\}@duke.edu} \\
\end{tabular}
\end{center}
\def\thefootnote{*}\footnotetext{Equal contributions.}

\begin{abstract}
Visual hallucination (VH) occurs when a multimodal large language model (MLLM) generates responses with incorrect visual details for prompts. Existing methods for generating VH test cases primarily rely on human annotations, typically in the form of triples: (image, question, answer). In this paper, we introduce \emph{\alg}, the first automated method for expanding VH test cases for MLLMs. Given an initial VH test case, \alg automatically expands it by perturbing the question and answer through \emph{negation} as well as modifying the image using both \emph{common} and \emph{adversarial perturbations}. Additionally, we propose a new evaluation metric, \emph{symmetric accuracy}, which measures the proportion of correctly answered VH test-case pairs. Each pair consists of a test case and its negated counterpart. Our theoretical analysis shows that symmetric accuracy is an \emph{unbiased evaluation metric} that remains unaffected by the imbalance of VH testing cases with varying answers when an MLLM is randomly guessing the answers, whereas traditional accuracy is prone to such imbalance. We apply \alg to expand three VH datasets annotated manually and use these expanded datasets to benchmark seven MLLMs. Our evaluation shows that \alg effectively identifies more VH test cases. Moreover, symmetric accuracy, being unbiased, leads to different conclusions about the vulnerability of MLLMs to VH compared to traditional accuracy metric. Finally, we show that fine-tuning MLLMs on the expanded VH dataset generated by \alg mitigates VH more effectively than fine-tuning on the original, manually annotated dataset. Our code is available at: \url{https://github.com/lycheeefish/VHExpansion}.
\end{abstract}
\section{Introduction}

Given a prompt containing both an image and a question, multimodal large language models (MLLMs)~\cite{li2024llava,liu2024improved,bai2023qwen-b,li2023blip,tong2024cambrian,li2024llavanext-strong} generate a text response. MLLMs extend the capabilities of large language models (LLMs)~\cite{llama3modelcard,bai2023qwen-a,touvron2023llama,yang2024qwen2,vicuna2023} by enabling them to understand visual inputs. An MLLM typically comprises three main components: \emph{a vision encoder}, \emph{a vision-language connector}, and \emph{an LLM}. Specifically, the vision encoder extracts visual embedding vectors from the image in the prompt, while the vision-language connector aligns these visual embedding vectors with the token-based input used by the LLM. The LLM then generates the text response based on the outputs of the vision-language connector and the text in the prompt. This integration allows MLLMs to tackle complex tasks like Visual Question Answering (VQA)~\cite{tong2024eyes,huang2024visual,li2023evaluating,fu2023mme}.

\begin{wrapfigure}{r}{0.4\textwidth}
    \centering
    \includegraphics[width=0.4\textwidth]{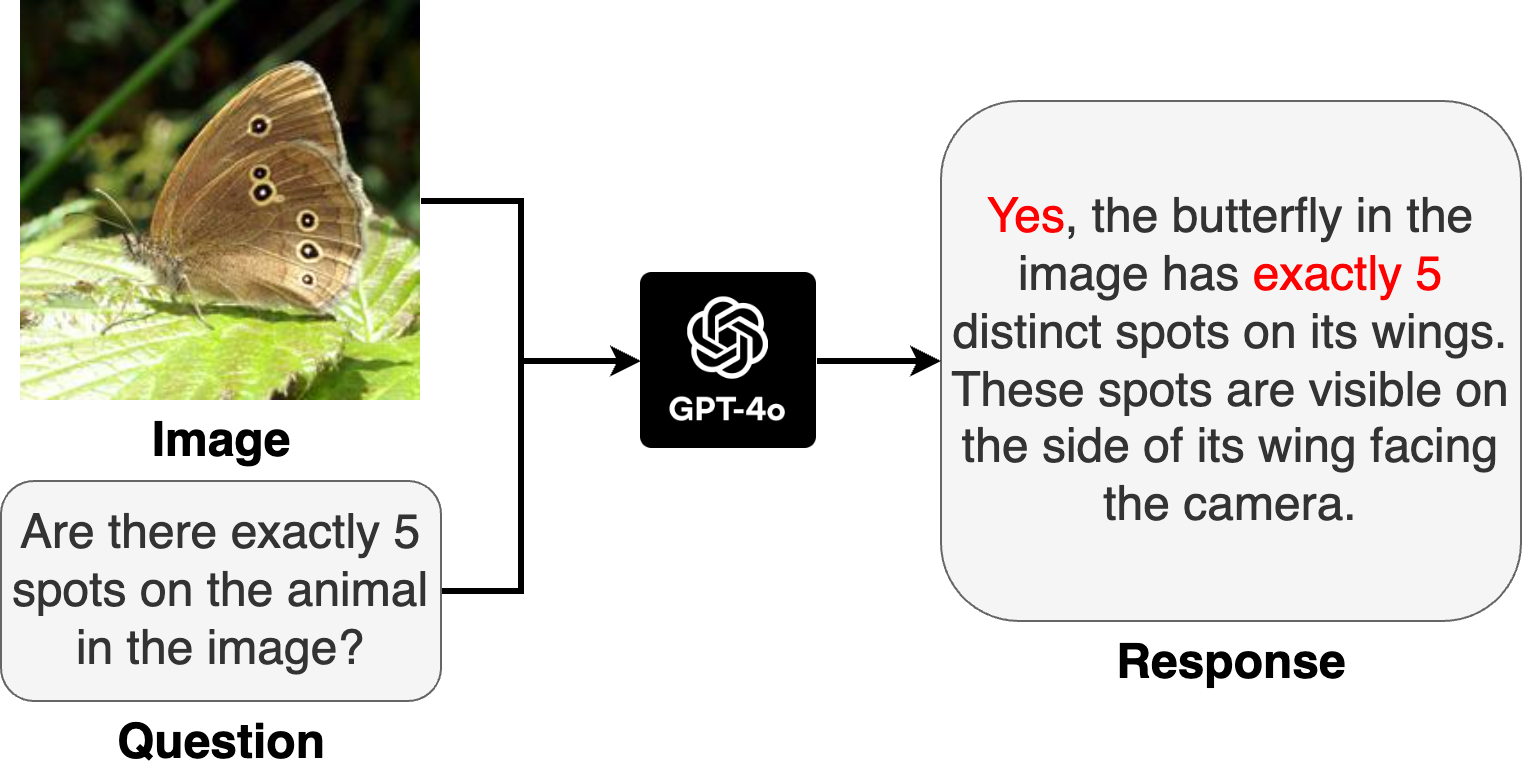}
    \caption{An example of visual hallucination (VH) in MLLM. The red text indicates the hallucinated response, since there are actually six spots on the butterfly's wings.
    }
    \label{fig:intro}
\end{wrapfigure}

Despite significant advancements, MLLMs are prone to a critical flaw known as visual hallucination (VH)~\cite{huang2024visual,liu2024survey}, where the model generates responses containing incorrect or misleading visual information. For example, Figure~\ref{fig:intro} illustrates a VH case  where the MLLM provides an incorrect response regarding the number of spots on a butterfly's wings. VH can lead to catastrophic outcomes, especially in high-stakes applications such as autonomous driving~\cite{wen2023road,chen2024driving}, medical diagnostics~\cite{qiu2022multimodal}, and content moderation~\cite{kumar2024watch}. Therefore, VH poses significant obstacles to the safe deployments of MLLMs. This concern is highlighted in the U.S. Executive Order on Trustworthy AI~\cite{house2023fact}, which emphasized rigorous testing of AI systems to identify and mitigate their potential harms. Therefore, developing methods to test and mitigate VH in MLLMs is crucial for ensuring their safety.

Existing VH testing relies on either manual~\cite{li2023evaluating} or semi-automated~\cite{huang2024visual,tong2024eyes} methods to construct test cases, both of which require extensive human annotations. As MLLMs evolve rapidly, these methods struggle to scale up VH testing, limiting the number of test cases and thus hindering comprehensive testing of MLLMs’ vulnerability to VH. Furthermore, existing VH testing methods do not consider adversarial testing~\cite{goodfellow2014explaining,carlini2017towards, shao2024refusing} in a white-box setting, where an adversary with full knowledge of the target MLLM can craft adversarial examples to trigger VH through adding human-imperceptible perturbations to the images. This is particularly relevant for open-sourced MLLMs whose model parameters are public. Thus, automated and adversarial methods for generating VH test cases are urgently needed.

\myparatight{Our work} To address these challenges, we introduce \alg, the first \emph{automated} framework  to generate VH test cases for MLLMs. Given an initial VH test case, \alg generates additional ones using a combination of \emph{negation} as well as \emph{common} and \emph{adversarial image perturbations}. Each VH test case is a VQA triple consisting of an image, a question, and a ground-truth answer. Negation flips the question and answer.
To automate negation, we leverage an LLM with a specifically designed prompt. 
For common image perturbations, we process the image via frequently encountered image processing operations such as JPEG compression, Gaussian noise, etc.. For adversarial image perturbations, we add a human-imperceptible perturbation to the image so that the resulting embedding vector, generated by the vision encoder, differs significantly from the original. We formulate finding the perturbation as a constrained optimization problem, solved using Projected Gradient Descent~\cite{madry2018towards} or the iterative Fast Gradient Sign Method~\cite{kurakin2018adversarial}. We apply \alg to expand three existing VH datasets annotated manually and use these expanded datasets to benchmark seven MLLMs. Our evaluation demonstrates that \alg effectively identifies more VH test cases.

Moreover, we introduce a new evaluation metric called \emph{symmetric accuracy}, which measures the proportion of correctly answered VH test-case pairs, where each pair includes a  VH test case and its negated counterpart. Symmetric accuracy captures the consistency of an MLLM in accurately answering both the original and negated questions.
In fact, we theoretically show that symmetric accuracy is an \emph{unbiased evaluation metric} that remains unaffected by the imbalance of VH testing cases with varying answers when an MLLM is randomly guessing the answers, whereas traditional accuracy is prone to such imbalance. Our empirical benchmark results show that symmetric accuracy and traditional accuracy can lead to different conclusions about MLLMs' vulnerability to VH.
For instance, on the POPE dataset~\cite{li2023evaluating}, Cambrian-1~\cite{tong2024cambrian} achieves a higher traditional accuracy than LLaVA-NeXT~\cite{li2024llavanext-strong} (0.887 vs. 0.879), but performs worse in symmetric accuracy (0.745 vs. 0.798).

Finally, we demonstrate that fine-tuning an MLLM on the expanded VH test cases generated by \alg significantly mitigates visual hallucinations. For example, our experiments show that when fine-tuning \llava on the POPE dataset, using randomly sampled 200 VH test cases results in a symmetric accuracy of 0.180, whereas fine-tuning on both the sampled VH test cases and the corresponding expanded 1,800 more VH test cases achieves a symmetric accuracy of 0.711. This highlights the effectiveness of \alg in mitigating VH in MLLMs. Additionally, our evaluation shows that fine-tuning does not compromise the model’s performance on other general-purpose VQA datasets, preserving its broader functionality.

To summarize, we make the following contributions in this work:

\begin{itemize}
\item We introduce \alg, the first automated framework for generating VH test cases in MLLMs, combining negation and common and adversarial image perturbations.
\item We propose a new evaluation metric, symmetric accuracy, to quantify an MLLM's performance. Symmetric accuracy is unaffected by the imbalance of test cases when an MLLM makes random guessing.
\item We demonstrate that fine-tuning MLLMs on the expanded test cases generated by \alg significantly mitigates VH while maintaining general performance on other VQA datasets.
\end{itemize}

\section{Related Work}

\subsection{MLLMs}

MLLMs~\cite{li2024llava,liu2024improved,bai2023qwen-b,li2023blip,tong2024cambrian,li2024llavanext-strong} have revolutionized the ability of LLMs to respond to prompts containing images and questions. Recall that MLLMs typically comprise three components: a vision encoder, a vision-language connector, and an LLM. Vision encoders are often pre-trained via self-supervised learning~\cite{radford2021learning,oquab2023dinov2} on large datasets of unlabeled images or image-text pairs. Among the widely used vision encoders are those from the CLIP family~\cite{radford2021learning}, including CLIP-ViT-L/14~\cite{radford2021learning}, EVA-CLIP ViT-g/14~\cite{fang2023eva}, and OpenCLIP ConvNeXt-XXL~\cite{ilharco2021openclip,liu2022convnet}. Cambrian-1~\cite{tong2024cambrian} also incorporates other vision encoders, including DINOv2 ViT-L/14~\cite{oquab2023dinov2} and SigLIP ViT-SO400M/14~\cite{zhai2023sigmoid}. Recently, several types of vision-language connectors have been introduced, such as 2-layer multilayer perceptrons (MLPs), Q-Former~\cite{dai2024instructblip}, 1-layer cross-attention mechanisms~\cite{bai2023qwen-a}, and Spatial Visual Aggregator~\cite{tong2024cambrian}. The backbone LLMs used in MLLMs can be models like Llama2~\cite{touvron2023llama}, Llama3~\cite{llama3modelcard}, Vicuna~\cite{vicuna2023}, and Qwen~\cite{bai2023qwen-b}.

\subsection{Methods to Generate VH Test Cases}
To detect and mitigate VH in MLLMs, several methods to generate VH test cases~\cite{li2023evaluating,huang2024visual,tong2024eyes,guan2023hallusionbench} have been proposed. These methods  can be categorized into two types: \emph{manual} and \emph{semi-automatic}. Manual methods~\cite{li2023evaluating,guan2023hallusionbench} involve creating each VH test case through human effort. For example, POPE~\cite{li2023evaluating} requires human annotation for each image to identify the objects within it and then design corresponding questions based on these objects, including some randomly introduced non-existent objects. Note that the images in POPE are also human-created.

To reduce the human labor involved in generating VH test cases, semi-automatic methods~\cite{huang2024visual} have been developed. For instance, VHTest uses GPT-4V~\cite{achiam2023gpt} and DALL$\cdot$E-3~\cite{betker2023improving} to facilitate  construction of VH test cases. Specifically, it first employs CLIP~\cite{radford2021learning} and DINO~\cite{oquab2023dinov2} to detect images from benchmark datasets that may trigger VH in MLLMs. These images are then passed to GPT-4V to generate textual descriptions. The generated text descriptions are subsequently passed to DALL$\cdot$E-3 to create more images. Based on these AI-generated images, human workers manually identify objects within them and design questions, along with the corresponding ground-truth answers. Similarly, MMVP~\cite{tong2024eyes} uses CLIP and DINO to identify image pairs that have a high CLIP score but a low DINO score. Human workers then manually examine the differences between these paired images and formulate questions/answers based on those differences.

\subsection{Mitigating VH via Fine-tuning}
With VH datasets constructed by these methods, MLLMs can be fine-tuned  on them to mitigate VH~\cite{huang2024visual}. This approach enables the MLLMs to learn from instances of VH, allowing them to distinguish between accurate visual representations and hallucinated content. By exposing the models to diverse VH instances during fine-tuning, they can better generalize and reduce the occurrence of hallucinations~\cite{huang2024visual}.
\section{Our \alg}
Figure~\ref{fig:overview} shows an overview of our \alg. Given an initial VH test case, \alg automatically generates additional VH test cases by modifying the question and answer through negation, as well as modifying the image through common and adversarial image perturbations. We denote a VH test case as $\{x_I, x_Q, y_A\}$, where $x_I$ and $x_Q$ are respectively the image and text question in the prompt, while $y_A$ is the ground-truth answer. To support automated evaluation, we focus on binary questions in this work, i.e.,  $y_A$ is either ``yes'' or ``no''. Note that non-binary question-answer pairs $(x_Q, y_A)$ can be rewritten as binary counterparts.

\begin{figure}[!t]
    \centering
    \includegraphics[width=0.9\textwidth]{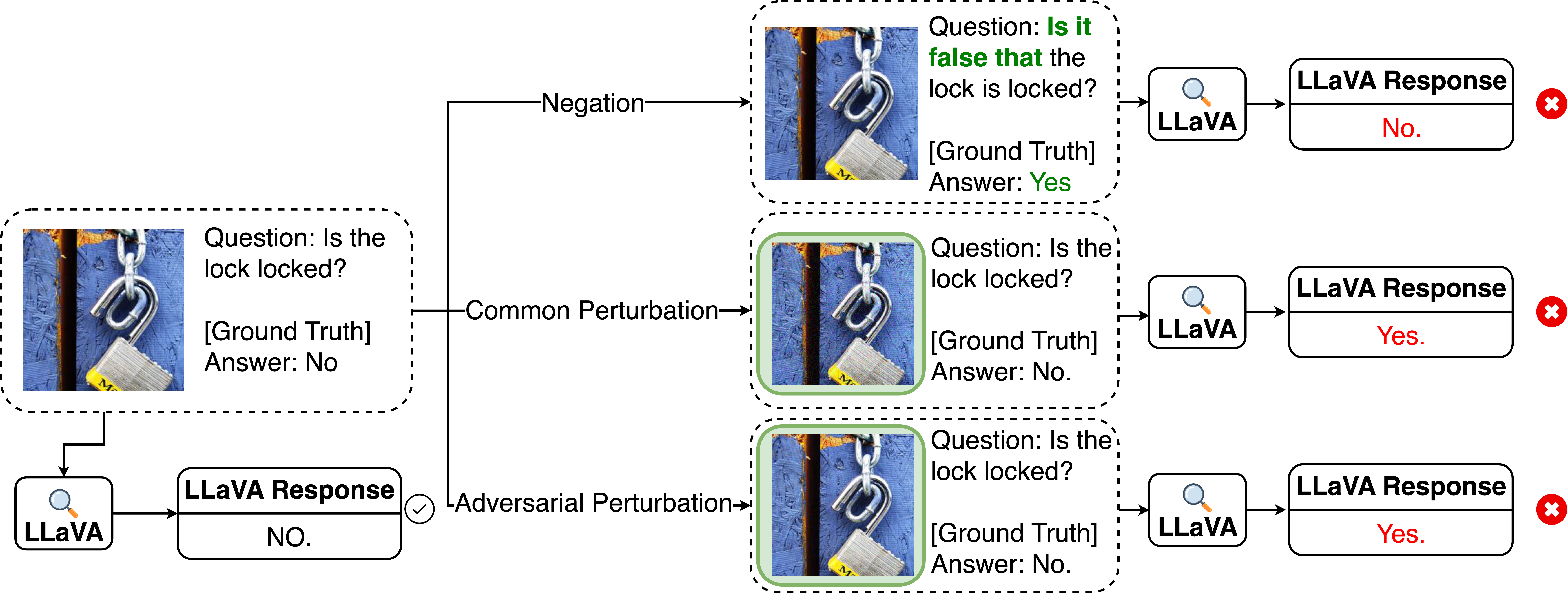}
    \caption{Overview of \alg. Green text and boxes indicate text and images modified by \alg.}
    \label{fig:overview}
\end{figure}

\subsection{Modifying Question $x_Q$ and Answer $y_A$ via Negation}

Given a VH test case $\{x_I, x_Q, y_A\}$, the goal of negation is to transform it into $\{x_I, \neg x_Q, \neg y_A\}$. Our~\alg automates this process using an LLM with a custom prompt (showed in Figure~\ref{prompt:negation}). This prompt takes $x_Q$ as input and instructs the LLM to output a negated question using predefined transformation rules, such as adding negation prefixes or modifying key words to reverse the meaning of $x_Q$.

\begin{figure}[!h]
\centering
\begin{custombox}[Negation Prompt]
Rephrase the following question to be a negated question for the original question. The rephrase method is to add prefix `Is it false' before the original question in a declarative sentence or change all occurrences of the ``a/an'' to ``no'' for simple cases. Below are the rules must be followed when rephrasing the question: DO NOT CHANGE OR ADD ANY INFORMATION to the sentence, such as the case of any letters except the first letter of the sentence, tenses, the order of clauses, pronouns, etc.. You should only return the rephrased question. The question is: [$x_Q$].
\end{custombox}
\caption{Prompt used to instruct an LLM to negate a question $x_Q$.}
\label{prompt:negation}
\end{figure}

The primary intuition behind negation is that an MLLM may simply guess the answer (i.e., ``yes'' or ``no'') correctly for binary questions without really understanding the image. In particular, some MLLMs such as \llava tend to answer ``yes" for binary questions~\cite{liu2024improved}. Therefore, if the VH test cases are imbalanced and a majority of them have ``yes"  as ground-truth answers, such MLLMs would have high \emph{accuracy} without understanding the images, misleading developers to think that the MLLMs are not vulnerable to visual hallucination. However, such MLLMs would be likely to answer incorrectly for the negated questions, leading to low accuracy on them. Thus, the VH test cases and their negated versions can better quantify the vulnerability of an MLLM to visual hallucination.  In fact, in Section~\ref{section:theory}, we propose a new evaluation metric, called symmetric accuracy, which measures the percentage of correctly answered VH test-case pairs, each of which includes a test case and its negated version. In Section~\ref{section:theory}, we theoretically show that symmetric accuracy is unaffected by the imbalance of VH test cases with answers ``yes'' and ``no'' when the MLLM makes random guessing, while accuracy on the original VH test cases alone is prone to such imbalance.

\subsection{Modifying Image $X_I$}

\myparatight{Common image perturbations} In real-world scenarios, images often undergo standard editing operations for various purposes. For example, images are frequently compressed using formats like JPEG to reduce transmission costs over the Internet. These image edits are known as \emph{common image perturbations}~\cite{hendrycks2019benchmarking}. Our~\alg uses these perturbations to generate additional VH test cases. Given a VH test case $\{x_I, x_Q, y_A\}$, we apply a common perturbation method $\mathcal{T}$ to the image $x_I$, creating a new VH test case $\{\mathcal{T}(x_I), x_Q, y_A\}$. The intuition is that for a slightly perturbed image $\mathcal{T}(x_I)$, the ground-truth answer $y_A$ should remain unchanged for the same question $x_Q$. However, this subtle alteration may trigger VH in an MLLM. We focus on four common image perturbations: Gaussian Noise, Brightness Adjustments, Defocus Blur, and JPEG Compression. Further details on these common perturbations are provided in Section~\ref{apdx-common} of the Appendix.

\myparatight{Adversarial image perturbations} In the context of adversarial image perturbations, we consider a white-box setting where an adversary, with full knowledge of the target MLLM's model parameters, crafts nearly-imperceptible adversarial perturbations to generate VH test cases. Given an original VH test case $\{x_I, x_Q, y_A\}$, the adversarial image perturbation generates a new test case $\{x_{I}+\delta^{*}, x_Q, y_A\}$, where $\delta^{*}$ is the adversarial perturbation. Our intuition is that for a VH test case that does not trigger VH in an MLLM $M$, \alg creates perturbations that cause the projected visual embedding vector from the vision-language connector to differ from the original. Conversely, if the test case already triggers VH in $M$, \alg generates perturbations that make the projected visual embedding vector similar to the original.
Formally, for an MLLM $M$ with vision encoder $M_E$ and vision-language connector $M_C$, we formulate finding $\delta^{*}$ as the solution to the following constrained optimization problem:
\begin{align}
\label{equation:adversarial_pert}
\delta^{*} = &\begin{cases}
\argmin_{\delta} \left( -\cos \left(M_E \circ M_C (x_I), M_E \circ M_C (x_I+\delta) \right) \right), & \text{if } x_I \text{ does not trigger VH}, \\
\argmin_{\delta} \left( \cos \left(M_E \circ M_C (x_I), M_E \circ M_C (x_I+\delta) \right) \right), & \text{if } x_I \text{ triggers VH},
\end{cases} \nonumber \\
\quad &\text{s.t.} \quad ||\delta||_{\infty} \leq \epsilon,
\end{align}
where $M_E \circ M_C$ denotes the concatenation of the vision encoder and the vision-language connector, $\cos$ denotes cosine similarity, and $\epsilon$ is the $\ell_{\infty}$-norm constraint on the perturbation $\delta$ added to the image $x_I$. Note that when the VH test case already triggers VH, we initialize $\delta$ to be a non-zero vector with random value and apply early stopping to avoid the optimization result to be identical with the original image input $x_I$; and  when the VH test case does not trigger VH, we initialize $\delta$ to be zero. 
Our algorithm solves the optimization problem in Equation~\ref{equation:adversarial_pert} using either Projected Gradient Descent (PGD)~\cite{madry2018towards} or the iterative Fast Gradient Sign Method (I-FGSM)~\cite{kurakin2018adversarial}. PGD iteratively updates $\delta$ via gradient ascent: $\delta = \delta - \gamma \cdot \nabla_{\delta} l$, where $l=\cos \left(M_E \circ M_C (x_I), M_E \circ M_C (x_I+\delta) \right)$, followed by projecting $\delta$ onto the feasible region using $\delta = \text{clip} (\delta, -\epsilon, \epsilon)$. I-FGSM differs from PGD by using the sign of the gradient instead: $\delta = \delta - \gamma \cdot \text{sign}(\nabla_{\delta} l)$.

\section{Theoretical Analysis}
\label{section:theory}

In this section, we theoretically analyze the standard accuracy metric and our proposed symmetric accuracy metric for evaluating an MLLM model's performance when the model is making random guessing. Suppose we are given a VH test case $t=\{x_I, x_Q, y_A\}$, sampled from the distribution $\mathcal{T}$ of VH test cases, i.e., $t\sim \mathcal{T}$. Our analysis focuses on binary questions, i.e., $y_A$ is either ``yes'' or ``no''. Specifically, we denote by $q$ the probability that a randomly sampled $t$ has a ground-truth answer ``yes''. In other words, a randomly sampled $t$ has a ground-truth answer ``no'' with probability $1-q$. $q$ quantifies the imbalance of the VH test cases with answers ``yes'' and ``no''.

We denote by $f$ an MLLM model and $f(x_I, x_Q)$ the MLLM's answer for the VH test case. $f(x_I, x_Q)\neq y_A$ indicates that the MLLM hallucinates. When the MLLM model makes random guessing to answer  the test case without understanding the image $x_I$ and question $x_Q$, it outputs an answer ``yes'' or ``no'' randomly. Suppose the MLLM model guesses ``yes" with probability $p$ and ``no" with probability $1-p$. 

An evaluation metric measures the performance of an MLLM model $f$ on the VH test cases whose distribution is $\mathcal{T}$. Specifically, an evaluation metric takes $\mathcal{T}$ and $f$ as input and outputs a number (e.g., between 0 and 1), with a smaller number indicating that $f$ is more vulnerable to VH test cases from the distribution $\mathcal{T}$.  An evaluation metric is \emph{unbiased} if it does not depend on the imbalance of the VH test cases when the model $f$ makes random guessing, i.e., it does not depend on $q$. Otherwise, the evaluation metric is biased. Formally, we have the following definition. 

\begin{definition}[Unbiased Evaluation Metric]
An evaluation metric is said to be {unbiased} if does not depend on $q$ when the MLLM model makes random guessing. 
\end{definition}

Next, we formally define accuracy and prove that accuracy is a biased evaluation metric.

\begin{definition}[Accuracy]
Accuracy is the probability that an MLLM model $f$ correctly answers a VH test case $t=\{x_I, x_Q, y_A\}$ sampled from $\mathcal{T}$. Formally, we have: $\text{accuracy} = \text{Pr}_{t\sim \mathcal{T}}(f(x_I, x_Q) = y_A)$.
\end{definition}

\begin{theorem}
\label{theorem:accuracy}
 Accuracy is a biased evaluation metric when $p \neq \frac{1}{2}$, where $p$ is the  probability that the MLLM model guesses answer ``yes''.
\end{theorem}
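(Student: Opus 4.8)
The plan is to compute accuracy explicitly under the random-guessing assumption and then exhibit its dependence on $q$. The crucial structural fact is that when $f$ makes random guessing, its output is \emph{independent} of the ground-truth answer $y_A$: it emits ``yes'' with probability $p$ and ``no'' with probability $1-p$ regardless of $x_I$ and $x_Q$. So I would begin by conditioning on the ground-truth answer via the law of total probability. When $y_A = $ ``yes'' (an event of probability $q$), the guess is correct precisely when $f$ outputs ``yes'', which happens with probability $p$; when $y_A = $ ``no'' (probability $1-q$), the guess is correct precisely when $f$ outputs ``no'', which happens with probability $1-p$.

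Combining these two cases yields the closed form
\begin{equation}
\text{accuracy} = q\,p + (1-q)(1-p).
\end{equation}
The next step is to read off the dependence on $q$. Rearranging gives $\text{accuracy} = (2p-1)\,q + (1-p)$, so the coefficient of $q$ is exactly $2p-1$. Equivalently, $\frac{\partial}{\partial q}\bigl[q\,p + (1-q)(1-p)\bigr] = 2p - 1$. Hence whenever $p \neq \tfrac{1}{2}$ this derivative is nonzero, the expression is a non-constant affine function of $q$, and accuracy genuinely varies as the imbalance $q$ changes. By the definition of an unbiased evaluation metric, accuracy therefore depends on $q$ under random guessing and is biased, which is exactly the claim of Theorem~\ref{theorem:accuracy}.

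I do not anticipate a serious obstacle here; the argument is a short computation. The only point that warrants care is justifying the independence step cleanly: one must state that ``random guessing'' means the emitted answer is drawn from the fixed Bernoulli$(p)$ distribution independently of $y_A$, so that $\Pr(f = y_A \mid y_A = \text{yes}) = p$ and $\Pr(f = y_A \mid y_A = \text{no}) = 1-p$ without any cross terms. Everything else follows by the law of total probability and elementary algebra. As a sanity check that I would mention in passing, setting $p = \tfrac{1}{2}$ collapses the expression to $\tfrac{1}{2}$ independent of $q$, confirming that $p \neq \tfrac{1}{2}$ is precisely the regime in which bias appears and delimiting the theorem's hypothesis sharply.
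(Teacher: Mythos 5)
Your proposal is correct and follows essentially the same route as the paper's proof: condition on $y_A$ using independence under random guessing, obtain $\text{accuracy} = qp + (1-q)(1-p)$, and observe that this is a non-constant affine function of $q$ exactly when $p \neq \frac{1}{2}$. Your explicit note on the derivative $2p-1$ and the sanity check at $p=\frac{1}{2}$ are welcome additions but do not change the argument.
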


\begin{proof}
Please refer to Section~\ref{proof:theorem_accuracy} in Appendix. 
\end{proof}

The above theorem shows that accuracy of an MLLM model depends on $q$ once it does not guess uniformly at random, and thus can be artificially inflated by random guessing, leading to misleading conclusions on an MLLM's vulnerability to visual hallucination.
To address this limitation, we propose a new metric called symmetric accuracy. Formally, it is defined as follows:

\begin{definition}[Symmetric Accuracy]
Symmetric accuracy is the probability that an MLLM model $f$ correctly answers a VH test case $t=\{x_I, x_Q, y_A\}$ sampled from $\mathcal{T}$ and its negated version. Formally, we have: $\text{symmetric}$ $\text{accuracy} = \text{Pr}_{t\sim \mathcal{T}}(f(x_I, x_Q) = y_A \wedge f(x_I, \neg x_Q) =  \neg y_A)$.
\end{definition}
We prove that symmetric accuracy is an unbiased evaluation metric in the following theorem:

\begin{theorem}
\label{theorem:symmetric_accuracy}
Symmetric accuracy is an unbiased evaluation metric.
\end{theorem}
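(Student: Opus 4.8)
The plan is to evaluate the symmetric-accuracy expression directly under the random-guessing model and show that its dependence on $q$ cancels. First I would make the random-guessing assumption precise: regardless of the input $(x_I, x_Q)$, the model $f$ returns ``yes'' with probability $p$ and ``no'' with probability $1-p$, and I would treat the answer to the original question $f(x_I, x_Q)$ and the answer to the negated question $f(x_I, \neg x_Q)$ as two independent such draws (the model guesses anew on each query, using neither the image nor the question). I would also record the defining property of negation: $\neg y_A$ is ``no'' when $y_A$ is ``yes'', and $\neg y_A$ is ``yes'' when $y_A$ is ``no''.

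Next I would apply the law of total probability, splitting on the ground-truth answer of the sampled test case. With probability $q$ the case has $y_A = $ ``yes'', so by the definition of symmetric accuracy the pair is counted correct exactly when $f(x_I, x_Q) = $ ``yes'' and $f(x_I, \neg x_Q) = $ ``no''; by independence this joint event has probability $p(1-p)$. With probability $1-q$ the case has $y_A = $ ``no'', so the pair is counted correct exactly when $f(x_I, x_Q) = $ ``no'' and $f(x_I, \neg x_Q) = $ ``yes'', which again has probability $(1-p)p$.

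Combining the two branches yields symmetric accuracy $= q\cdot p(1-p) + (1-q)\cdot p(1-p) = p(1-p)$, which is free of $q$; by the definition of an unbiased evaluation metric, the theorem follows. The structural point worth emphasizing, in contrast to Theorem~\ref{theorem:accuracy}, is that demanding simultaneous correctness on a question and its negation forces the model to have emitted exactly one ``yes'' and one ``no'' no matter which answer is the ground truth, so both branches of the case analysis contribute the identical factor $p(1-p)$ and the $q$-weighted sum collapses; in the plain-accuracy computation the two branches instead contribute $p$ and $1-p$, which only coincide at $p = \tfrac{1}{2}$.

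The main obstacle is not computational but a modeling question: the argument hinges on the independence of the guess for $x_Q$ and the guess for $\neg x_Q$, which is what makes each off-diagonal joint probability equal $p(1-p)$. I would state this explicitly as part of formalizing ``random guessing,'' and note that a strictly weaker hypothesis suffices, namely symmetry of the joint guessing distribution, $\Pr\!\left(f(x_I,x_Q)=\text{``yes''},\, f(x_I,\neg x_Q)=\text{``no''}\right) = \Pr\!\left(f(x_I,x_Q)=\text{``no''},\, f(x_I,\neg x_Q)=\text{``yes''}\right)$, since that single equality is precisely what gives the $q$ and $1-q$ terms a common coefficient and forces the cancellation.
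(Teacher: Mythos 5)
Your proposal is correct and follows essentially the same route as the paper's proof: condition on the ground-truth answer, use independence of the two guesses to get $p(1-p)$ in each branch, and observe that the $q$-weighted sum collapses to $p(1-p)$. Your added remarks---making the independence assumption explicit and noting that symmetry of the joint guessing distribution would suffice---are sensible refinements but do not change the argument.
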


\begin{proof}
Please refer to Section~\ref{proof:theorem_symmetric_accuracy} in Appendix.
\end{proof}

\section{Experiments}
\label{section:experiments}

\subsection{Experimental Setup}
\begin{table}[!t]
\centering
\fontsize{8pt}{10pt}\selectfont
\setlength{\tabcolsep}{4pt}
\caption{Statistics of existing VH datasets manually annotated.}
\label{table:datasets}
\begin{tabular}{cccc}
\toprule
\textbf{Dataset} & \textbf{\# Images} & \textbf{\# VH Test Cases} \\
\midrule
MMVP~\cite{tong2024eyes} & 300 & 300 \\
\midrule
VHTest~\cite{huang2024visual} & 650 & 1,200 \\
\midrule
POPE~\cite{li2023evaluating} & 500 & 9,000 \\
\bottomrule
\end{tabular}
\end{table}

\myparatight{VH datasets}
We use three popular VH datasets: MMVP~\cite{tong2024eyes}, VHTest~\cite{huang2024visual}, and POPE~\cite{li2023evaluating}. MMVP and VHTest consist of VH test cases across various object properties in images, such as color, counting, and position. In contrast, POPE focuses on VQA test cases related to existence VH, specifically identifying whether an object is present in an image. Table~\ref{table:datasets} summarizes the key statistics of these datasets. Note that for VHTest and POPE, a single image can be used in multiple VH test cases.

\myparatight{MLLMs}
In our experiments, we evaluate seven MLLMs in total. In particular, six of these models are open-source, including \llava~\cite{liu2024improved}, \blip~\cite{dai2024instructblip}, \qwen~\cite{bai2023qwen-a}, LLaVA-NeXT~\cite{li2024llavanext-strong}, LLaVA-OneVision~\cite{li2024llava}, and Cambrian-1~\cite{tong2024cambrian}, alongside one closed-source model, GPT-4o~\cite{openai2024gpt4o}. These MLLMs demonstrate state-of-the-art performance across various VQA benchmarks and have diverse model architectures. Table~\ref{table:mllm} shows details of these MLLMs.

\begin{table}[!t]
\fontsize{8pt}{10pt}\selectfont
\setlength{\tabcolsep}{4pt}
\centering
\caption{Details of MLLMs.}
\label{table:mllm}
{
\begin{tabular}{cccc}
\toprule
\textbf{MLLM} & \textbf{Vision Encoder} & \textbf{Connector} & \textbf{LLM} \\
\midrule
LLaVA-1.5~\cite{liu2024improved} & CLIP-ViT-L/14~\cite{radford2021learning} & 2-layer MLP & Llama2-7B~\cite{touvron2023llama} \\
\midrule
InstructBLIP~\cite{dai2024instructblip} & EVA-CLIP ViT-g/14~\cite{fang2023eva} & Q-Former~\cite{li2023blip} & Vicuna-7B~\cite{vicuna2023} \\
\midrule
Qwen-VL-Chat~\cite{bai2023qwen-a} & OpenCLIP ViT-bigG~\cite{ilharco2021openclip} & 1-layer Cross-Attention & Qwen-7B~\cite{bai2023qwen-b} \\
\midrule
LLaVA-NEXT~\cite{li2024llavanext-strong} & CLIP-ViT-L/14 & 2-layer MLP & Llama3-8B~\cite{llama3modelcard} \\
\midrule
LLaVA-OneVision~\cite{li2024llava} & SigLIP ViT-SO400M/14~\cite{zhai2023sigmoid} & 2-layer MLP & Qwen2-7B~\cite{yang2024qwen2} \\
\midrule
\multirow{4}{*}{Cambrian-1~\cite{tong2024cambrian}} & CLIP ViT-L/14 & \multirow{4}{*}{\centering \makecell{Spatial Visual \\ Aggregator \\ ~\cite{tong2024cambrian}}} & \multirow{4}{*}{\centering Llama3-8B} \\
& SigLIP ViT-SO400M/14 & & \\
& OpenCLIP ConvNeXt-XXL~\cite{liu2022convnet} & & \\
& DINOv2 ViT-L/14~\cite{oquab2023dinov2} & & \\
\midrule
GPT-4o~\cite{openai2024gpt4o} & - & - & - \\
\bottomrule
\end{tabular}}
\end{table}

\myparatight{Evaluation metrics} 
We use accuracy and symmetric accuracy as our evaluation metrics,  both of which are formally defined in Section~\ref{section:theory}. Our theoretical analysis demonstrates that symmetric accuracy is an unbiased evaluation metric, whereas traditional accuracy is biased. In our experiments, we illustrate how symmetric accuracy leads to different conclusions about the vulnerability of MLLMs to VH compared to the traditional accuracy metric. Subsequently, we use symmetric accuracy as our default evaluation metric unless otherwise mentioned. We also report the number of successful VH test cases generated by our~\alg.

\begin{table}[!t]
\centering
\caption{Accuracy, symmetric accuracy, and \# new successful VH test cases for seven MLLMs on the three VH datasets.}
\fontsize{8pt}{10pt}\selectfont
\setlength{\tabcolsep}{3pt}
\vspace{-3mm}
\subfloat[MMVP dataset]{
\begin{tabular}{cccccccc}
\hline
\textbf{MLLM} & \textbf{LLaVA-1.5} & \textbf{InstructBLIP} & \textbf{Qwen-VL-Chat} & \textbf{Cambrian-1} & \textbf{LLaVA-NeXT} & \textbf{LLaVA-OneVision} & \textbf{GPT-4o} \\
\hline
\textbf{Accuracy} & 0.638 & 0.533 & 0.607 & 0.649 & 0.697 & 0.717 & 0.813 \\
\hline
\textbf{\makecell{Symmetric\\Accuracy}} & 0.356 & 0.320 & 0.210 & 0.268 & 0.430 & 0.333 & 0.663 \\
\hline
\textbf{\makecell{\# New Successful\\VH test cases}} & 145 & 166 & 175 & 178 & 126 & 152 & 85 \\
\hline
\end{tabular}
}

\vspace{2mm}

\subfloat[VHTest dataset]{
\begin{tabular}{cccccccc}
\hline
\textbf{MLLM} & \textbf{LLaVA-1.5} & \textbf{InstructBLIP} & \textbf{Qwen-VL-Chat} & \textbf{Cambrian-1} & \textbf{LLaVA-NeXT} & \textbf{LLaVA-OneVision} & \textbf{GPT-4o} \\
\hline
\textbf{Accuracy} & 0.542 & 0.499 & 0.537 & 0.631 & 0.588 & 0.632 & 0.709 \\
\hline
\textbf{\makecell{Symmetric\\Accuracy}} & 0.308 & 0.117 & 0.156 & 0.260 & 0.287 & 0.328 & 0.423 \\
\hline
\textbf{\makecell{\# New Successful\\VH test cases}} & 599 & 643 & 627 & 670 & 585 & 647 & 523 \\
\hline
\end{tabular}
}

\vspace{2mm}

\subfloat[POPE dataset]{
\begin{tabular}{cccccccc}
\hline
\textbf{MLLM} & \textbf{LLaVA-1.5} & \textbf{InstructBLIP} & \textbf{Qwen-VL-Chat} & \textbf{Cambrian-1} & \textbf{LLaVA-NeXT} & \textbf{LLaVA-OneVision} & \textbf{GPT-4o} \\
\hline
\textbf{Accuracy} & 0.861 & 0.860 & 0.692 & 0.887 & 0.879 & 0.889 & 0.861 \\
\hline
\textbf{\makecell{Symmetric\\Accuracy}} & 0.468 & 0.444 & 0.354 & 0.745 & 0.798 & 0.843 & 0.425 \\
\hline
\textbf{\makecell{\# New Successful\\VH test cases}} & 3,978 & 4,368 & 4,845 & 2,046 & 1,281 & 976 & 4,504 \\
\hline
\end{tabular}
}
\label{tab:accuracies_of_mllms}
\vspace{-4mm}
\end{table}

\myparatight{Parameter settings}
Unless otherwise mentioned, we use~\llava on MMVP dataset by default. We use GPT-4o as the LLM to negate all questions in VH test cases due to its state-of-the-art performance. We use the default parameter settings for all MLLMs. For common image perturbations, the parameters are set as follows: Gaussian Noise standard deviation \(\sigma = 0.08\), Brightness Hue-Saturation
-Value space constant \(c = 0.5\), Defocus Blur radius \(r=5\) , and JPEG Compression quality factor \(q=30\). More details of these common perturbations are shown in Section~\ref{apdx-common} in Appendix. For adversarial image perturbations, the default setting is: $\ell_{\infty}$-norm constraint $\epsilon= 8/255$,  with 500 epochs for non-hallucinated VH test cases and 100 epochs for hallucinated test cases. In hallucinated test cases, each pixel of the initial perturbation is set to $5/255$ or $-5/255$ uniformly at random.

\subsection{Experimental Results}

\begin{table}[!t]
\centering
\caption{Symmetric accuracy and \# new successful VH test cases on the three datasets of different MLLMs before and after common image perturbations. Due to API query limits, we sample 3,000 VH test cases from the POPE dataset for GPT-4o.}
\fontsize{8pt}{10pt}\selectfont
\setlength{\tabcolsep}{3pt}

\subfloat[MMVP dataset]{
\begin{tabular}{cccccccc}
\hline
\textbf{MLLM} & \textbf{LLaVA-1.5} & \textbf{InstructBLIP} & \textbf{Qwen-VL-Chat} & \textbf{Cambrian-1} & \textbf{LLaVA-NeXT} & \textbf{LLaVA-OneVision} & \textbf{GPT-4o} \\
\hline
\textbf{No Perturbation} & 0.356 & 0.320 & 0.210 & 0.268 & 0.430 & 0.333 & 0.663 \\
\hline
\textbf{Gaussian Noise} & 0.353 & 0.187 & 0.147 & 0.213 & 0.370 & 0.317 & 0.643 \\
\hline
\textbf{Brightness} & 0.317 & 0.177 & 0.160 & 0.190 & 0.357 & 0.273 & 0.613 \\
\hline
\textbf{Defocus Blur} & 0.353 & 0.163 & 0.193 & 0.183 & 0.297 & 0.317 & 0.543 \\
\hline
\textbf{JPEG Compression} & 0.373 & 0.253 & 0.213 & 0.270 & 0.410 & 0.347 & 0.657 \\
\hline
\end{tabular}
}

\vspace{2mm}

\subfloat[\# New successful VH test cases on MMVP dataset]{
\begin{tabular}{cccccccc}
\hline
\textbf{MLLM} & \textbf{LLaVA-1.5} & \textbf{InstructBLIP} & \textbf{Qwen-VL-Chat} & \textbf{Cambrian-1} & \textbf{LLaVA-NeXT} & \textbf{LLaVA-OneVision} & \textbf{GPT-4o} \\
\hline
\textbf{Gaussian Noise} & 275 & 305 & 295 & 274 & 237 & 259 & 158 \\
\hline
\textbf{Brightness} & 270 & 300 & 309 & 270 & 238 & 273 & 165 \\
\hline
\textbf{Defocus Blur} & 256 & 305 & 296 & 283 & 255 & 262 & 185 \\
\hline
\textbf{JPEG Compression} & 253 & 299 & 285 & 279 & 233 & 239 & 148 \\
\hline
\end{tabular}
}

\vspace{2mm}

\subfloat[VHTest dataset]{
\begin{tabular}{cccccccc}
\hline
\textbf{MLLM} & \textbf{LLaVA-1.5} & \textbf{InstructBLIP} & \textbf{Qwen-VL-Chat} & \textbf{Cambrian-1} & \textbf{LLaVA-NeXT} & \textbf{LLaVA-OneVision} & \textbf{GPT-4o} \\
\hline
\textbf{No perturbation} & 0.308 & 0.117 & 0.156 & 0.260 & 0.287 & 0.328 & 0.423 \\
\hline
\textbf{Gaussian Noise} & 0.312 & 0.138 & 0.170 & 0.258 & 0.272 & 0.279 & 0.429 \\
\hline
\textbf{Brightness} & 0.292 & 0.124 & 0.164 & 0.238 & 0.278 & 0.287 & 0.392 \\
\hline
\textbf{Defocus Blur} & 0.302 & 0.110 & 0.125 & 0.154 & 0.282 & 0.278 & 0.271 \\
\hline
\textbf{JPEG Compression} & 0.312 & 0.177 & 0.193 & 0.282 & 0.293 & 0.289 & 0.433 \\
\hline
\end{tabular}
}

\vspace{2mm}

\subfloat[\# New successful VH test cases on VHTest dataset]{
\begin{tabular}{cccccccc}
\hline
\textbf{MLLM} & \textbf{LLaVA-1.5} & \textbf{InstructBLIP} & \textbf{Qwen-VL-Chat} & \textbf{Cambrian-1} & \textbf{LLaVA-NeXT} & \textbf{LLaVA-OneVision} & \textbf{GPT-4o} \\
\hline
\textbf{Gaussian Noise} & 1,142 & 1,210 & 1,196 & 1,112 & 1,089 & 1,167 & 937 \\
\hline
\textbf{Brightness} & 1,160 & 1,209 & 1,212 & 1,114 & 1,079 & 1,139 & 1,001 \\
\hline
\textbf{Defocus Blur} & 1,162 & 1,210 & 1,210 & 1,133 & 1,086 & 1,145 & 1,164 \\
\hline
\textbf{JPEG Compression} & 1,168 & 1,216 & 1,220 & 1,075 & 1,069 & 1,129 & 938 \\
\hline
\end{tabular}
}

\vspace{2mm}

\subfloat[POPE dataset]{
\begin{tabular}{cccccccc}
\hline
\textbf{MLLM} & \textbf{LLaVA-1.5} & \textbf{InstructBLIP} & \textbf{Qwen-VL-Chat} & \textbf{Cambrian-1} & \textbf{LLaVA-NeXT} & \textbf{LLaVA-OneVision} & \textbf{GPT-4o} \\
\hline
\textbf{No perturbation} & 0.468 & 0.444 & 0.354 & 0.745 & 0.798 & 0.843 & 0.425 \\
\hline
\textbf{Gaussian Noise} & 0.462 & 0.433 & 0.413 & 0.735 & 0.782 & 0.828 & 0.412 \\
\hline
\textbf{Brightness} & 0.444 & 0.428 & 0.345 & 0.738 & 0.757 & 0.819 & 0.389 \\
\hline
\textbf{Defocus Blur} & 0.449 & 0.435 & 0.486 & 0.699 & 0.789 & 0.646 & 0.396 \\
\hline
\textbf{JPEG Compression} & 0.465 & 0.440 & 0.402 & 0.726 & 0.828 & 0.724 & 0.410 \\
\hline
\end{tabular}
}

\vspace{2mm}

\subfloat[\# New successful VH test cases on POPE dataset]{
\begin{tabular}{cccccccc}
\hline
\textbf{MLLM} & \textbf{LLaVA-1.5} & \textbf{InstructBLIP} & \textbf{Qwen-VL-Chat} & \textbf{Cambrian-1} & \textbf{LLaVA-NeXT} & \textbf{LLaVA-OneVision} & \textbf{GPT-4o} \\
\hline
\textbf{Gaussian Noise} & 5,297 & 5,686 & 6,567 & 3,291 & 2,594 & 2,177 & 872 \\
\hline
\textbf{Brightness} & 5,502 & 5,788 & 7,662 & 3,221 & 2,634 & 2,337 & 916 \\
\hline
\textbf{Defocus Blur} & 5,429 & 5,727 & 5,410 & 4,025 & 3,049 & 3,605 & 897 \\
\hline
\textbf{JPEG Compression} & 5,260 & 5,685 & 6,909 & 3,173 & 2,535 & 2,878 & 896 \\
\hline
\end{tabular}
}

\label{tab:sym_accuracies_img_pert_mllms}
\end{table}

\myparatight{Symmetric accuracy v.s. accuracy}
Table~\ref{tab:accuracies_of_mllms} shows accuracy and symmetric accuracy of the seven MLLMs across the three datasets MMVP, VHTest, and POPE. We have three main observations.
First, symmetric accuracy reveals different conclusions about MLLM vulnerability to VH compared to traditional accuracy. For example, on the POPE dataset, Cambrian-1 has higher traditional accuracy than LLaVA-NeXT (0.887 vs. 0.879) but performs worse in symmetric accuracy (0.745 vs. 0.798).  Second, when comparing symmetric accuracy across MLLMs, GPT-4o achieves the highest scores on MMVP and VHTest, particularly on MMVP with 0.663, indicating it is less prone to visual hallucinations than other models. LLaVA-OneVision scores the highest symmetric accuracy on POPE (0.843), likely due to its fine-tuning on simpler existence-based questions and possible overlap between POPE and its training data. Third, across VH datasets, all MLLMs perform worse on VHTest, with InstructBLIP scoring only 0.117. This is likely because VHTest contains VH test cases with AI-generated images and more complex questions that the models have not trained on, making it more challenging. In addition, expanding the dataset using negation allows us to generate more new VH instances, providing additional training data for fine-tuning, which leads to more robust models.

\begin{table}[!t]
\centering
\caption{Symmetric accuracy and \# new successful VH test cases on the three datasets of different MLLMs before and after adversarial image perturbations. We cannot perform adversarial image perturbations for Cambrian-1 because of our limited GPU memory, and we do not have results for GPT-4o because it is closed-source.}
\fontsize{8pt}{10pt}\selectfont
\setlength{\tabcolsep}{3pt}

\subfloat[MMVP dataset]{
\begin{tabular}{cccccc}
\hline
\textbf{Perturbation} & \textbf{LLaVA-1.5} & \textbf{InstructBLIP} & \textbf{Qwen-VL-Chat} & \textbf{LLaVA-NeXT} & \textbf{LLaVA-OneVision} \\
\hline
\textbf{No perturbation} & 0.356 & 0.320 & 0.210 & 0.430 & 0.333 \\
\hline
\textbf{I-FGSM} & 0.051 & 0.080 & 0.027 & 0.263 & 0.297 \\
\hline
\textbf{PGD} & 0.094 & 0.080 & 0.051 & 0.287 & 0.283 \\
\hline
\end{tabular}
}

\vspace{2mm}

\subfloat[\# New successful VH test cases on the MMVP dataset]{
\begin{tabular}{cccccc}
\hline
\textbf{Perturbation} & \textbf{LLaVA-1.5} & \textbf{InstructBLIP} & \textbf{Qwen-VL-Chat} & \textbf{LLaVA-NeXT} & \textbf{LLaVA-OneVision} \\
\hline
\textbf{I-FGSM} & 416 & 390 & 320 & 297 & 300 \\
\hline
\textbf{PGD} & 357 & 373 & 321 & 312 & 297 \\
\hline
\end{tabular}
}

\vspace{2mm}

\subfloat[VHTest dataset]{
\begin{tabular}{cccccc}
\hline
\textbf{Perturbation} & \textbf{LLaVA-1.5} & \textbf{InstructBLIP} & \textbf{Qwen-VL-Chat} & \textbf{LLaVA-NeXT} & \textbf{LLaVA-OneVision} \\
\hline
\textbf{No perturbation} & 0.308 & 0.117 & 0.156 & 0.287 & 0.328 \\
\hline
\textbf{I-FGSM} & 0.102 & 0.053 & 0.097 & 0.144 & 0.147 \\
\hline
\textbf{PGD} & 0.166 & 0.059 & 0.117 & 0.204 & 0.249 \\
\hline
\end{tabular}
}

\vspace{2mm}

\subfloat[\# New successful VH test cases on VHTest dataset]{
\begin{tabular}{cccccc}
\hline
\textbf{Perturbation} & \textbf{LLaVA-1.5} & \textbf{InstructBLIP} & \textbf{Qwen-VL-Chat} & \textbf{LLaVA-NeXT} & \textbf{LLaVA-OneVision} \\
\hline
\textbf{I-FGSM} & 1,493 & 1,306 & 1,243 & 1,246 & 1,322 \\
\hline
\textbf{PGD} & 1,341 & 1,290 & 1,233 & 1,205 & 1,186 \\
\hline
\end{tabular}
}

\vspace{2mm}

\subfloat[POPE dataset]{
\begin{tabular}{cccccc}
\hline
\textbf{Perturbation} & \textbf{LLaVA-1.5} & \textbf{InstructBLIP} & \textbf{Qwen-VL-Chat} & \textbf{LLaVA-NeXT} & \textbf{LLaVA-OneVision} \\
\hline
\textbf{No perturbation} & 0.468 & 0.444 & 0.354 & 0.798 & 0.843 \\
\hline
\textbf{I-FGSM} & 0.017 & 0.152 & 0.072 & 0.526 & 0.573 \\
\hline
\textbf{PGD} & 0.030 & 0.174 & 0.088 & 0.553 & 0.761 \\
\hline
\end{tabular}
}

\vspace{2mm}

\subfloat[\# New successful VH test cases on POPE dataset]{
\begin{tabular}{cccccc}
\hline
\textbf{Perturbation} & \textbf{LLaVA-1.5} & \textbf{InstructBLIP} & \textbf{Qwen-VL-Chat} & \textbf{LLaVA-NeXT} & \textbf{LLaVA-OneVision} \\
\hline
\textbf{I-FGSM} & 7,588 & 8,627 & 10,340 & 5,453 & 4,717 \\
\hline
\textbf{PGD} & 10,246 & 8,414 & 10,029 & 5,296 & 2,940 \\
\hline
\end{tabular}
}

\label{tab:sym_accuracies_adv_pert_mllms}
\end{table}

\myparatight{Common and adversarial image perturbations generate more VH test cases}
Table~\ref{tab:sym_accuracies_img_pert_mllms} shows the symmetric accuracy on three datasets of different MLLMs before and after common image perturbations. We observe that symmetric accuracy slightly decreases after common image perturbations in most cases. This shows that most MLLMs are generally robust against common perturbations. For instance, LLaVA-NeXT's symmetric accuracy on the VHTest dataset drops from 0.287 to 0.272 under Gaussian Noise. However, there are still some notable exceptions. For example, Defocus Blur significantly reduces LLaVA-OneVision’s accuracy on POPE, from 0.843 to 0.646; while three of four common perturbations even increase InstructBLIP’s symmetric accuracy on VHTest.

Table~\ref{tab:sym_accuracies_adv_pert_mllms} shows the symmetric accuracy on the three datasets of different MLLMs before and after adversarial image perturbations. Our main observation is that adversarial perturbations cause significant drops in symmetric accuracy for all MLLMs. For example, \llava's symmetric accuracy on POPE drops sharply from 0.468 to 0.017 when performing I-FGSM to craft adversarial perturbations. When comparing I-FGSM with PGD, I-FGSM consistently results in a larger decrease in accuracy, indicating it is more effective.

To conclude, MLLMs are fairly robust to common image perturbations but remain vulnerable to adversarial ones, highlighting the need for more adversarially robust training strategies. Furthermore, both common and adversarial perturbations lead to the generation of more new VH instances, with adversarial perturbations producing more VH instances than common perturbations, since MLLMs are more vulnerable to them.

\myparatight{Manual verification for negation}
The correctness of our proposed symmetric accuracy metric relies on the validity of the negated questions, which are generated by LLMs. Since LLMs may exhibit hallucinations, these negated questions might not always be the negated counterparts of the original questions. Thus, it is necessary to verify whether the negated questions generated by the LLM are correct.

To validate the correctness of these negated questions, we randomly sampled 200 VQA triples (100 original-negation pairs) from each of the MMVP, VHTest, and POPE datasets, which were evaluated by four independent annotators. The task of the annotators was to judge whether each negated question was the correct negation of the corresponding original question generated by the LLM.
The annotators unanimously agreed that all negated questions were correctly generated by the LLM.
This result demonstrates the reliability of the LLM in generating valid negations.

\begin{table}[!t]
\centering
\caption{Ablation study on symmetric accuracy for adversarial image perturbations for~\llava on MMVP dataset when using I-FGSM.}
\fontsize{8pt}{10pt}\selectfont
\setlength{\tabcolsep}{4pt}

\subfloat[$\ell_{\infty}$-norm constraint $\epsilon$]{
\label{subtab:ablation_eps}
\begin{tabular}{ccccc}
\hline
\textbf{$\epsilon$}  & 4/255 & 8/255 & 12/255 & 16/255 \\
\hline
\textbf{\makecell{Symmetric\\Accuracy}} & 0.080 & 0.051 & 0.047  & 0.053  \\
\hline
\end{tabular}
}
\vspace{2mm}
\subfloat[Perturbation step size $\gamma$]{
\label{subtab:ablation_lr}
\begin{tabular}{cccccc}
\hline
\textbf{$\gamma$}   & 0.3/255   & 0.4/255   & 0.5/255   & 0.6/255   & 0.7/255   \\
\hline
\textbf{\makecell{Symmetric\\Accuracy}} & 0.051 & 0.059 & 0.051 & 0.054 & 0.040 \\
\hline
\end{tabular}
}

\subfloat[Iterations for hallucinated VH test cases]{
\label{subtab:ablation_iters_h}
\begin{tabular}{cccccc}
\hline
\textbf{Iterations} & 50    & 75    & 100   & 125   & 150   \\
\hline
\textbf{\makecell{Symmetric\\Accuracy}}  & 0.049 & 0.062 & 0.051 & 0.042 & 0.054 \\
\hline
\end{tabular}
}
\vspace{2mm}
\subfloat[Iterations for non-hallucinated VH test cases]{
\label{subtab:ablation_iters_non_h}
\begin{tabular}{cccccc}
\hline
\textbf{Iterations} & 100   & 300   & 500   & 700   & 900   \\
\hline
\textbf{\makecell{Symmetric\\Accuracy}} & 0.090 & 0.058 & 0.051 & 0.050 & 0.050 \\
\hline
\end{tabular}
}

\subfloat[Repetition of evaluation]{
\label{subtab:ablation_test_trial}
\begin{tabular}{cccccc}
\hline
\textbf{\# Repetition}  & 1     & 2     & 3     & 4     & 5     \\
\hline
\textbf{\makecell{Symmetric\\Accuracy}} & 0.043 & 0.040 & 0.051 & 0.047 & 0.049 \\
\hline
\end{tabular}
}
\vspace{2mm}
\subfloat[Temperature of MLLM]{
\label{subtab:ablation_temperature}
\begin{tabular}{ccccccc}
\hline
\textbf{Temperature}  & 0.0    & 0.2  & 0.4  & 0.6  & 0.8  & 1.0  \\
\hline
\textbf{\makecell{Symmetric\\Accuracy}} & 0.037 & 0.051 & 0.066 & 0.099 & 0.104 & 0.096 \\
\hline
\end{tabular}
}

\end{table}

\subsection{Ablation Study}

We conduct a comprehensive ablation study on adversarial image perturbation using I-FGSM, since it is the most effective method to generate successful VH test cases in our~\alg.

\myparatight{Impact of $\ell_{\infty}$-norm constraint $\epsilon$}
Recall that I-FGSM projects the perturbation into the feasible region defined by the $\ell_{\infty}$-norm constraint $\epsilon$ at each iteration. Table~\ref{subtab:ablation_eps} shows the effect of varying $\epsilon$ on symmetric accuracy. We observe that symmetric accuracy initially decreases and then stabilizes as the $\ell_{\infty}$-norm constraint $\epsilon$ increases. For example, at \(\epsilon = 4/255\), symmetric accuracy is 0.080, dropping to 0.051 at \(\epsilon = 8/255\), after which it converges. This trend occurs because larger perturbations changes the visual embedding vector more significantly of an image for a non-hallucinated VH test case, which is more likely to trigger VH and thereby reducing symmetric accuracy.

\myparatight{Impact of perturbation step size $\gamma$}
The perturbation step size $\gamma$ controls the update in every iteration of I-FGSM. Table~\ref{subtab:ablation_lr} shows the impact of $\gamma$ on symmetric accuracy. We observe that symmetric accuracy is relatively insensitive to different small perturbation step size $\gamma$. 

\myparatight{Impact of iterations}
Since I-FGSM solves the optimization problem in Equation~\ref{equation:adversarial_pert} iteratively, we study the impact of the number of iterations and present the results in Table~\ref{subtab:ablation_iters_h} and Table~\ref{subtab:ablation_iters_non_h} for hallucinated and non-hallucinated VH test cases, respectively. For hallucinated VH test cases, we observe that symmetric accuracy remains consistently low as the number of iterations increases from 50 to 150. This is because I-FGSM updates the adversarial perturbations to increase the cosine similarity between the original and perturbed images for hallucinated VH test cases, maintaining the effectiveness of VH test cases. In non-hallucinated VH test cases, symmetric accuracy initially decreases and then stabilizes as the number of iterations increases from 100 to 900.

\myparatight{Impact of repetition of evaluation}
Due to the inherent randomness in the decoding algorithm of MLLMs, we repeat the evaluation and report the average symmetric accuracy in Table~\ref{subtab:ablation_test_trial}, varying the number of repetitions. We observe that symmetric accuracy remains consistent across different repetition counts, ranging from 0.040 to 0.051. This suggests that symmetric accuracy stabilizes after only a few repetitions, with even a single evaluation providing reliable results, thus avoiding unnecessary computational overhead. 

\myparatight{Impact of MLLM's temperature}
Temperature controls the randomness of MLLMs’ responses, with higher temperatures typically leading to more diverse outputs. Table~\ref{subtab:ablation_temperature} shows the impact of temperature on~\llava's symmetric accuracy. We observe a slight increase in symmetric accuracy as the temperature increases from 0 to 1, likely because the MLLM explores more diverse outputs at higher temperatures.

\subsection{Mitigating VH via Fine-tuning}
\cite{huang2024visual} demonstrate that fine-tuning MLLMs on VH datasets constructed using VH test case generation methods can help mitigate VH. In this section, we compare the symmetric accuracy across three scenarios: 1) before fine-tuning, 2) fine-tuning on original VH test cases generated by other methods, and 3) fine-tuning on original VH test cases generated by other methods combined with expanded VH test cases from our~\alg.

\myparatight{Experimental settings}
We use~\llava as the fine-tuning MLLM. For fine-tuning on the original VH test cases generated by other methods, we randomly sample 200 VH test cases from each of the MMVP, VHTest, and POPE datasets, along with 4,000 randomly sampled VQA triples from the \llava fine-tuning data~\cite{liu2024improved}. For fine-tuning on our expanded VH test cases, we expand the previously sampled 200 VH test cases from each of the three datasets using negation and adversarial image perturbations, resulting in 800 VH test cases. To further increase data diversity, we use GPT-4o to rephrase the questions four times for each VH test case, generating four additional versions of each. Consequently, our expanded fine-tuning set contains 4,000 VH test cases and the sampled 4,000 VQA triples from the fine-tuning data of \llava. All remaining VH test cases from the three VH datasets, along with their adversarially perturbed versions, are used as evaluation data.

Following~\llava~\cite{liu2024improved}, we fine-tune~\llava using LoRA~\cite{hu2021lora} with a learning rate of \(1.8 \times 10^{-6}\) for one epoch. All other parameters are set to the default fine-tuning settings of~\llava.

\myparatight{Experimental results}
The comparison results of fine-tuning are shown in Table~\ref{tab:datasets} and Table~\ref{tab:mme_results}. Our findings demonstrate that fine-tuning on our expanded VH test cases significantly improves symmetric accuracy across the three VH datasets. For instance, on the POPE dataset, symmetric accuracy increases slightly from 0.180 to 0.189 after fine-tuning on the original VH test cases, but rises substantially to 0.711 after fine-tuning on our expanded VH test cases. This highlights the effectiveness of using VH test cases generated by our~\alg to mitigate VH in MLLMs. Moreover, Table~\ref{tab:mme_results} shows that fine-tuning on our expanded VH test cases maintains the model’s performance on other general-purpose VQA datasets, MME Perception and MME Recognition~\cite{fu2023mme}.

\begin{table}[!t]
\centering
\fontsize{8pt}{10pt}\selectfont
\setlength{\tabcolsep}{4pt}
\caption{Symmetric accuracy before and after fine-tuning on different image and VQA combinations.}
\vspace{-2mm}
\begin{tabular}{cccc}
\hline
\textbf{} & \textbf{Before Fine-tuning} & \makecell{\textbf{After Fine-tuning on} \\ \textbf{Original VH Test Cases}} & \makecell{\textbf{After Fine-tuning on Our} \\ \textbf{Expanded VH Test Cases}} \\ \hline
\textbf{MMVP} & 0.207 & 0.172 & 0.343 \\ \hline
\textbf{VHTest} & 0.206 & 0.208 & 0.225 \\ \hline
\textbf{POPE} & 0.180 & 0.189 & 0.711 \\ \hline
\end{tabular}
\label{tab:datasets}
\end{table}

\begin{table}[!t]
\fontsize{8pt}{10pt}\selectfont
\setlength{\tabcolsep}{4pt}
\centering
\caption{Scores on MME Perception and MME Cognition before and after fine-tuning.}
\vspace{-2mm}
\begin{tabular}{c c c c}
\hline
\textbf{} & \textbf{Before Fine-tuning} & \makecell{\textbf{After Fine-tuning on} \\ \textbf{Original VH Test Cases}} & \makecell{\textbf{After Fine-tuning on Our} \\ \textbf{Expanded VH Test Cases}} \\ \hline
\textbf{MME Perception} & 1459.3 & 1456.7 & 1434.4 \\ \hline
\textbf{MME Cognition} & 335.4 & 327.5 & 323.9  \\ \hline
\end{tabular}
\label{tab:mme_results}
\end{table}

\begin{figure}[!t]
    \centering
    \begin{subfigure}[b]{0.32\textwidth}
        \centering
        \includegraphics[width=\textwidth]{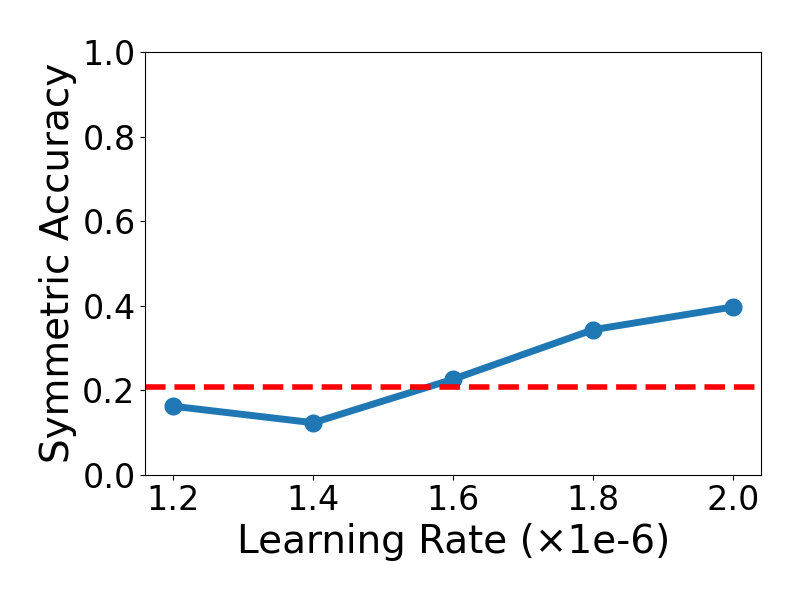}
        \caption{MMVP dataset}
        \label{fig:symmetric_acc}
    \end{subfigure}
    \hfill
    \begin{subfigure}[b]{0.32\textwidth}
        \centering
        \includegraphics[width=\textwidth]{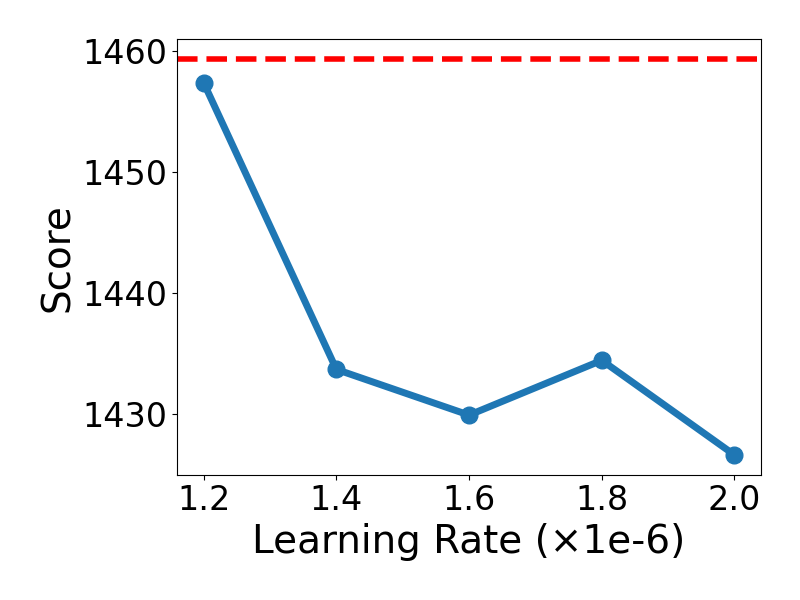}
        \caption{MME Perception}
        \label{fig:perception}
    \end{subfigure}
    \hfill
    \begin{subfigure}[b]{0.32\textwidth}
        \centering
        \includegraphics[width=\textwidth]{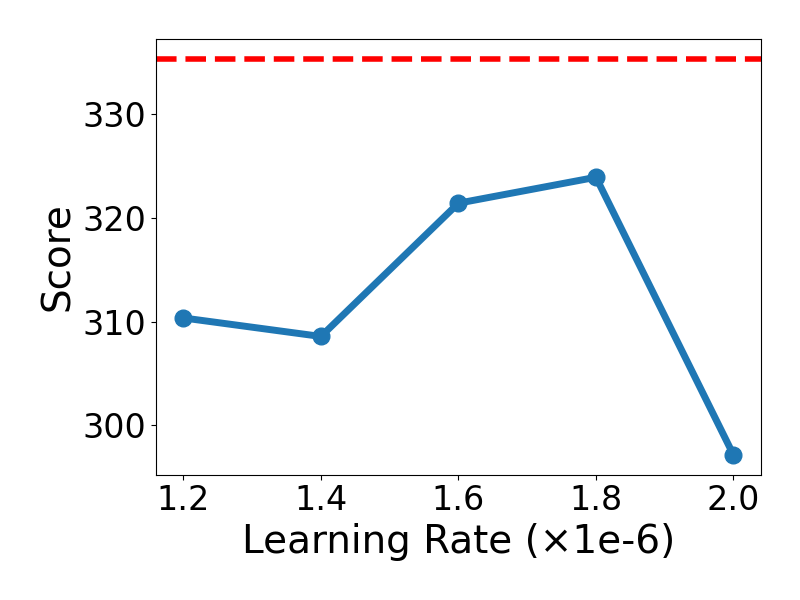}
        \caption{MME Cognition}
        \label{fig:cognition}
    \end{subfigure}
    
    \caption{Impact of learning rate on symmetric accuracy for the MMVP dataset, and scores on MME perception and MME cognition, when fine-tuning \llava on our expanded VH test cases. The red horizontal lines represent the performance of~\llava before fine-tuning.} 
    \vspace{-5mm}
    \label{fig:finetune_results}
\end{figure}

\myparatight{Impact of fine-tuning learning rate}
Figure~\ref{fig:finetune_results} illustrates the impact of different fine-tuning learning rates on symmetric accuracy for the MMVP dataset, scores on MME Perception and scores on MME Cognition. We observe that performance across these datasets is highly sensitive to the fine-tuning learning rate.  At the learning rate of \(1.8 \times 10^{-6}\), the fine-tuned MLLM achieves the best trade-off among performances on all three datasets.

\section{Conclusion}
In this paper, we introduce \alg, an automated framework to generate VH test cases for MLLMs. \alg significantly advances VH testing by automating the generation of test cases through techniques such as negation and image perturbations, both common and adversarial. We also propose an unbiased evaluation metric, symmetric accuracy, to measure the consistency of MLLMs in answering VH test cases and their negated counterparts. Our experiments demonstrate that, given VH test cases, \alg can find more successful VH test cases. Importantly, fine-tuning MLLMs on the expanded VH test cases generated by \alg significantly mitigates VH, while maintaining general performance on standard VQA tasks. 

\bibliographystyle{unsrt}
\bibliography{paper-main}

\appendix
\newpage

\section{Proof of Theorem~\ref{theorem:accuracy}}
\label{proof:theorem_accuracy}
\begin{proof}
Standard accuracy is defined as: $\text{Accuracy} = \text{Pr}_{t\sim \mathcal{T}}(f(x_I, x_Q) = y_A)$. Since the model's predictions are independent of $y_A$ when random guessing:
\begin{align}
E[\text{Accuracy}] &= P(y_A = \text{Yes}) \cdot P( f(x_I, x_Q) = \text{Yes} ) + P(y_A = \text{No}) \cdot P( f(x_I, x_Q) = \text{No} ) \nonumber\\
& = q \cdot p + (1 - q) \cdot (1 - p) \nonumber\\
& = 1+(2p-1)\cdot q - p. \nonumber
\end{align}

This expression shows that $E[\text{Accuracy}]$ \textbf{depends} on the class distribution ($P(y_A = \text{Yes})$) if $p\neq \frac{1}{2}$. If the model's bias aligns with the majority class (e.g., $p$ is large when $P(y_A = \text{Yes})$ is large), $E[\text{Accuracy}]$ is artificially inflated, even though the model is merely guessing.

Therefore, standard accuracy is biased due to class imbalance and model bias.
\end{proof}

\section{Proof of Theorem~\ref{theorem:symmetric_accuracy}}
\label{proof:theorem_symmetric_accuracy}
\begin{proof}
Symmetric accuracy is defined as: $\text{Symmetric Accuracy} = \text{Pr}_{t\sim \mathcal{T}}(f(x_I, x_Q) = y_A \wedge f(x_I, \neg x_Q) =  \neg y_A)$. Since model predictions are independent of $y_A$ and independent between $x_Q$ and $\neg x_Q$ under random guessing:
\begin{align}
E[\text{Symmetric Accuracy}] &= P( f(x_I, x_Q) = y_A) \cdot P( f(x_I, \neg x_Q) = \neg y_A )\nonumber\\
&=P(y_A = \text{Yes}) \cdot P(f(x_I, x_Q)=\text{Yes})\cdot P(f(x_I, \neg x_Q)=\text{No}) \nonumber\\
& \quad + P(y_A = \text{No})\cdot P(f(x_I, x_Q)=\text{No})\cdot P(f(x_I, \neg x_Q)=\text{Yes}) \nonumber\\
&=q \cdot p(1-p) + (1-q) \cdot p(1-p) \nonumber\\
& = p(1-p).
\end{align}

Therefore,$E[\text{Symmetric Accuracy}]=p(1-p)$, which is \textbf{independent} of the class distribution ($P(y_A = \text{Yes})$). Thus, symmetric accuracy is an unbiased evaluation metric with respect to class imbalance.

\end{proof}

\section{Details of Common Image Perturbations}
\label{apdx-common}
\begin{itemize} 
    \item {\bf Gaussian Noise}
    In this method, Gaussian noise is randomly sampled from a distribution with zero mean and a standard deviation of $\sigma$. The image pixel values are first converted to the range [0, 1], and the generated noise is then added to these values. This process simulates the noise real-world images might experience during transmission. In our experiments, the standard deviation $\sigma$ is set to 0.08.

    \item {\bf Brightness}  
    This method adjusts image brightness by modifying its V (value) channel in the HSV color space. The input image is first normalized to [0, 1] and converted from RGB to HSV. The brightness is then altered by adding a constant $c$ to the V channel, with values clipped to the range [0, 1]. The image is finally converted back to RGB. In our experiments, the constant $c$ is set to 0.5.
    
    \item {\bf Defocus Blur}  
    This method applies a defocus blur to the image using a disk-shaped kernel. The input image is normalized to [0, 1], and a disk kernel of radius $c$ is generated. Each of the three RGB channels is filtered independently with this kernel, then recombined and clipped to the range [0, 1]. The radius $c$ is set to 5 in our experiments.
    
    \item {\bf JPEG Compression}  
    This method compresses the input image using a specified quality factor $q$. Lower $q$ values result in higher compression and more artifacts, while higher values retain more image quality. In our experiments, the quality factor $q$ is set to 30.

\end{itemize}

\end{document}